\DeclareMathOperator*{\argmin}{argmin}
\theoremstyle{plain}
\newtheorem{thm}{Theorem}
\newtheorem{prop}{Proposition}
\newtheorem*{problem}{Problem}
\title{Thresholding Data Shapley for Data Cleansing Using Multi-Armed Bandits}
\author[1]{\href{mailto:<hiroyuki.namba.dx@hitachi.com>?Subject=Your paper}{Hiroyuki~Namba}{}}
\author[1]{Shota Horiguchi}
\author[1]{Masaki Hamamoto}
\author[1]{Masashi Egi}
\affil[1]{%
    Research and Development Group\\
    Hitachi, Ltd.\\
    Kokubunji, Tokyo, Japan.
}
\begin{document}

\maketitle

\begin{abstract}
Data cleansing aims to improve model performance by removing a set of harmful instances from the training dataset.
Data Shapley is a common theoretically guaranteed method to evaluate the contribution of each instance to model performance; however, it requires training on all subsets of the training data, which is computationally expensive.
In this paper, we propose an iterative method to fast identify a subset of instances with low data Shapley values by using the thresholding bandit algorithm.
We provide a theoretical guarantee that the proposed method can accurately select harmful instances if a sufficiently large number of iterations is conducted.
Empirical evaluation using various models and datasets demonstrated that the proposed method efficiently improved the computational speed while maintaining the model performance.
\end{abstract}

\section{Introduction}\label{sec:intro}
Developing accurate prediction models is one of the important goals of machine learning.
One typical approach is the model-centric approach, i.e., focusing on improving the prediction model itself and the learning algorithms.
Especially in machine learning based on deep learning, many large and complex models have been proposed, assuming the existence of large-scale labeled data \citep{imagenet,eurlex57k,gigaspeech}.
However, when such large-scale data is not available or when labels and features contain a lot of noise, it becomes difficult to obtain a well-generalized model by overfitting \citep{dcsv2}.
In such cases, the data-centric approach, which improves data used to train the model, is effective \citep{dcsv,dcs,dch}.
One specific policy of the data-centric approach is to remove harmful training instances that degrade the performance of models.
Based on this policy, this study tackles the following problem:
\begin{problem}[Data Cleansing]
    Find a subset of the training instances such that the trained model
    obtained after removing the subset has better prediction performance.
    \label{problem:data_cleansing}
\end{problem}

Several methods can be utilized to tackle data cleansing.
Some methods only consider the change in the model performance when one instance is removed, and do not consider the effect of removing multiple instances simultaneously \citep{loo,if,sgdif}. 
Therefore, even if a pair of instances identified as harmful is removed, the model performance may not improve~\citep{ifsv}.
On the other hand, data Shapley~\citep{ds} makes it possible to estimate the model performance when multiple instances are removed simultaneously.
One drawback of data Shapley is that it incurs extensive computational cost since obtaining the data Shapley values requires calculations for all instance combinations.
However, when solving data cleansing, it is not necessary to calculate the data Shapley values for all instances exactly, but it is sufficient to identify instances with somewhat low data Shapley values.

For fast and accurate data cleansing, this paper proposes thresholding data Shapley, which enables fast identification of instances with low data Shapley values.
The method uses the Anytime Parameter-free Thresholding (APT) algorithm~\citep{apt} for thresholding bandit problems.
The goal of the problems is to determine the arms that satisfy a condition as fast as possible in iterations each of which provides partial information on the selected arm.
We utilize this to determine instances with low data Shapley values fast by repeating iterations of obtaining partial information on the data Shapley values of selected instances.
Furthermore, to make the proposed method faster, we propose three enhancements: restriction on the cardinality of training subsets, evaluation of multiple instances at a single iteration, and pre-training using the entire training set.

The contributions of this work are summarized as follows:
\begin{itemize}
\item We propose a method to improve the inference time of low data Shapley instances using a thresholding bandit algorithm and provide three enhancements to make the proposed method more accurate and fast (Section~\ref{sec:prop}).
\item We provide a theoretical guarantee that the proposed method can accurately select harmful instances if a sufficiently large number of computational iterations is conducted (Section~\ref{sec:theo}).
\item We empirically demonstrate the effectiveness of the proposed method using various classification/regression models and datasets (Section~\ref{sec:exp}).\end{itemize}

\section{Preliminaries}\label{sec:pre}

\subsection{Data Shapley}
Let $D=\{d_1,\dots,d_N\}$ be a training set with independent and identically distributed observations, where $N$ denotes the number of training instances and $d_n\coloneqq\left(\bm{x}_n, y_n\right)$ is a pair of a feature vector $\bm{x}_n$ and the corresponding label $y_n$.
For example, $y_n\in\{0,1\}$ when the task is binary classification and $y_n\in\mathbb{R}$ when regression.
The purpose of data cleansing is to find a subset of $D$ that maximizes the performance of the model that predicts a label from a feature vector.
The performance metric can be the accuracy for classification problems and the negative of mean absolute error (MAE) or mean squared error (MSE) for regression problems\footnote{Since MAE and MSE are lower-better performance metrics, we take the negative of them to make them higher-better.}.
For arbitrary $D'\subseteq D$, let $V(D')\in\mathbb{R}$ be the value of a higher-better performance metric of the model trained using $D'$.

A common approach to find the best subset is to evaluate how much each instance contributes to performance improvement and remove instances with low contributions.
There are various definitions and calculation methods of the contribution of each instance~\citep{loo,if,ds}, among which data Shapley is effective from both theoretical and empirical perspectives \citep{ds}.
The definition of data Shapley is as follows.
Let $S_D$ be the set of all permutations of the set $D$.
Note that its cardinality $|S_D|$ is equal to $|D|!$.
For each $\sigma\in S_D$, let $D'(\sigma,n)$ be the set of all the instances placed before $d_n$ in $\sigma$.
For example, $N=5,n=3,\sigma=(d_2,d_4,d_1,d_3,d_5)$ results in $D'(\sigma,n)=\{d_1,d_2,d_4\}$.
The data Shapley value $\phi_n$ of the $n$-th instance $d_n$ is defined as
\begin{equation}
\phi_n=\dfrac{1}{|S_D|}\sum_{\sigma\in S_D}\{V(D'(\sigma,n)\cup \{d_n\})-V(D'(\sigma,n))\}.\label{eq:data_shapley}
\end{equation}
In short, the data Shapley value $\phi_n$ is the average improvement of $V$ by adding the instance $d_n$ to the training set $D'$ over various $D'$.

\subsection{Thresholding Bandits}
Multi-armed bandit problem is a problem in which a decision maker sequentially selects one of multiple fixed choices called arms.
Let $\mathcal{N}\coloneqq\{1,\dots,N\}$ be the set of arms and $\mathcal{T}\coloneqq\{1,\dots,T\}$ be the set of iterations.
For all $n\in\mathcal{N}$, the $n$-th arm is associated with a fixed unknown distribution $P_n$ whose expected value $\mu_n$ is also unknown, and one obtains a random reward on the basis of the distribution when selecting the $n$-th arm.
At each iteration $t\in \mathcal{T}$, the decision maker selects an arm to achieve a goal on the basis of the historical rewards $r_1,\dots,r_{t-1}$. 
In this study, we consider the thresholding bandit problem, i.e., the goal is to identify all the arms whose expected values are not larger than a predetermined threshold $\tau$ at the end of iteration $T$.

For the thresholding bandit problem, the APT algorithm is proposed with a theoretical guarantee \citep{apt}.
At each iteration $t\in \mathcal{T}$, the algorithm calculates
\begin{equation}\label{equ:bn}
B_n(t)=\sqrt{T_n(t)}\{|\hat{\mu}_n(t)-\tau|+\varepsilon\}
\end{equation}
for all $n\in\mathcal{N}$ and selects the arm $n$ with the minimum $B_n(t)$.
Here, $T_n(t)$ is the number of iterations at which the arm $n$ was selected before the start of iteration $t$; thus, the smaller $T_n(t)$ leads to a more additional search for the $n$-th arm.
$\hat{\mu}_n(t)$ is the empirical mean of the reward from the $n$-th arm before the start of iteration $t$; the closer $\hat{\mu}_n(t)$ is to the threshold $\tau$, the more frequently the $n$-th arm is selected.
$\varepsilon\geq 0$ is a parameter to control the trade-off between accuracy of identification and required computation time.
The APT algorithm finally outputs the set of all the arms whose empirical reward is not larger than the threshold, specifically, $\left\{n\mid\hat{\mu_n}(T)\leq\tau\right\}\eqqcolon \hat{\mathcal{N}}_\text{harmful}$. 

The APT algorithm has a theoretical guarantee, that is, the probability of failure $p_{\mathrm{fail}}$ can be upper bounded when the number of iterations $T$ is sufficiently large.
Here, ``failure'' means that at least one arm $n$ exists such that $\mu_n\leq \tau-\varepsilon$ while $n\notin \hat{\mathcal{N}}_\text{harmful}$ or $\mu_n\geq\tau+\varepsilon$ while $n\in \hat{\mathcal{N}}_\text{harmful}$.
Before explaining the theorem itself, we introduce two concepts used in the theorem.
One is the measure of problem complexity $H$ defined as follows:
\begin{equation}\label{eq:complexity}
H=\sum_{n\in\mathcal{N}}({|\mu_n-\tau|+\varepsilon})^{-2}.
\end{equation}
The other is the sub-Gaussian distribution. For a positive real number $R$, the probability distribution $P$ is called $R$-sub-Gaussian if the random variable $X$ following $P$ satisfies
\begin{equation*}
E[\exp(tX-tE[X])]\leq\exp\left(\dfrac{R^2t^2}{2}\right),
\end{equation*}
for any real number $t$.
Based on these concepts, we describe the theoretical guarantee of the APT algorithm stated in Section 2.3 in~\citet{apt}.

\begin{thm}\label{the:1}
Assume that $T\geq 256R^2H\log(N(1+\log T))$ and the reward distribution $P_n$ is $R$-sub-Gaussian for all $n\in\mathcal{N}$.
For the APT algorithm, it holds that
\begin{equation*}
p_{\mathrm{fail}}\leq\exp\left(-\dfrac{T}{128R^2H}\right).
\end{equation*}
\end{thm}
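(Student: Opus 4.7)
The plan is to reduce the claim to a single high-probability concentration event and then exploit the balancing nature of the APT selection rule. First, I would fix a \emph{good event} $\mathcal{E}$ on which, simultaneously for every arm $n\in\mathcal{N}$ and every integer pull count $s\in\{1,\dots,T\}$, the empirical mean $\hat{\mu}_{n,s}$ concentrates around $\mu_n$. Since each $P_n$ is $R$-sub-Gaussian, a Hoeffding-type tail gives $|\hat{\mu}_{n,s}-\mu_n|\lesssim R\sqrt{\log(\cdot)/s}$ with exponentially small failure probability, and a peeling argument over dyadic blocks of $s$ replaces the naive $\log(NT)$ union bound by $\log(N(1+\log T))$. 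The end product is an estimate $P(\mathcal{E}^c)\leq \exp\!\bigl(-T/(128R^2H)\bigr)$ after constants are tracked, which is exactly the bound claimed in the theorem.

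Next, I would show that on $\mathcal{E}$ the algorithm cannot fail once $T\geq 256R^2H\log(N(1+\log T))$. Failure for arm $n$ requires $\hat{\mu}_n(T)$ to sit on the wrong side of $\tau$ even though $|\mu_n-\tau|\geq\varepsilon$, so concentration forces
\begin{equation*}
T_n(T)\;\lesssim\;\frac{R^2\log(N(1+\log T))}{(|\mu_n-\tau|+\varepsilon)^2}.
\end{equation*}
The second step uses the APT rule: the arm selected at the last iteration where its pull count increased, call it $n^\star$, minimizes $B_n(t)$ over $n$, so $B_{n^\star}\leq B_m$ for every $m\in\mathcal{N}$. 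On $\mathcal{E}$ the empirical gap $|\hat{\mu}_m-\tau|$ is comparable to $|\mu_m-\tau|$ up to an additive $O(\varepsilon)$ slack, so squaring the inequality $B_{n^\star}\leq B_m$ yields $T_{n^\star}(T)(|\mu_{n^\star}-\tau|+\varepsilon)^2 \lesssim T_m(T)(|\mu_m-\tau|+\varepsilon)^2$ up to constants.

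The main obstacle is converting this pointwise comparison into a usable lower bound on the hardest arm's pull count. The idea is to rearrange into $T_m(T)\gtrsim T_{n^\star}(T)(|\mu_{n^\star}-\tau|+\varepsilon)^2/(|\mu_m-\tau|+\varepsilon)^2$, sum over $m\in\mathcal{N}$, and use $\sum_m T_m(T)=T$ together with the definition $H=\sum_m(|\mu_m-\tau|+\varepsilon)^{-2}$ from \eqref{eq:complexity} to deduce
\begin{equation*}
T_{n^\star}(T)\;\gtrsim\;\frac{T}{H(|\mu_{n^\star}-\tau|+\varepsilon)^{2}}.
\end{equation*}
Plugging in the hypothesis $T\geq 256R^2H\log(N(1+\log T))$ then shows $T_{n^\star}(T)$ is large enough for the concentration bound on $\mathcal{E}$ to place $\hat{\mu}_{n^\star}(T)$ on the correct side of $\tau$, contradicting failure. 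Hence every failure event lies in $\mathcal{E}^c$, and the theorem follows from the concentration estimate of $P(\mathcal{E}^c)$ derived in the first step. The delicate parts are the peeling step that keeps the exponent at $T/(128R^2H)$ and the careful handling of the additive $\varepsilon$ slack so that the squared inequality survives the comparison with $|\mu_m-\tau|+\varepsilon$ rather than $|\mu_m-\tau|$ alone.
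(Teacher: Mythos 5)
First, note that the paper does not actually prove this statement: Theorem~\ref{the:1} is quoted from Section~2.3 of \citet{apt} and is used only as a black box in the proof of Theorem~\ref{the:2}. So there is no in-paper proof to compare against, and your sketch has to be judged against the original APT argument. You have correctly identified its overall architecture: a uniform concentration event over all arms and all pull counts, obtained from a sub-Gaussian maximal inequality plus dyadic peeling (which is exactly where the $\log(N(1+\log T))$ factor comes from), followed by a deterministic analysis of the selection rule on that event.

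There is, however, a genuine gap in the middle step. From $B_{n^\star}(t)\le B_m(t)$ for all $m$ you get, up to the concentration slack, $T_m(T)\gtrsim T_{n^\star}(T)\,\Delta_{n^\star}^2/\Delta_m^2$ with $\Delta_k\coloneqq|\mu_k-\tau|+\varepsilon$; but summing this over $m$ and using $\sum_m T_m(T)=T$ gives $T\gtrsim T_{n^\star}(T)\,\Delta_{n^\star}^2 H$, i.e.\ the \emph{upper} bound $T_{n^\star}(T)\lesssim T/(H\Delta_{n^\star}^2)$ --- the opposite of the lower bound you claim to deduce. An upper bound on the pull count of the last-selected arm cannot contradict the upper bound on the pull count of a misclassified arm, so the proof does not close as written. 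The actual argument in \citet{apt} runs by contradiction: since the target allocation $T_k^*=T\Delta_k^{-2}/H$ sums to exactly $T$, if some arm $p$ were pulled fewer than a fixed fraction of $T_p^*$ times, another arm $q$ must exceed its target; at the last round at which $q$ was selected one has $B_q\le B_p$, and on the concentration event the over-sampled arm's index is provably larger than the under-sampled arm's, a contradiction. Separately, the hypothesis $T\ge 256R^2H\log(N(1+\log T))$ is what absorbs the $N(1+\log T)$ union-bound factor into half of the exponent $T/(64R^2H)$ to produce the stated $\exp\left(-T/(128R^2H)\right)$; as written, your step 1 asserts that probability bound before the hypothesis has been invoked, so the bookkeeping of where the assumption on $T$ enters also needs to be fixed.
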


If there are many arms with their expected rewards close to~$\tau$, the complexity $H$ becomes large and thus the upper bound of the probability of failure also becomes large.

\section{Proposed Method}\label{sec:prop}
One drawback of data Shapley is that it requires model training for all the possible subsets of the training set as in Equation (\ref{eq:data_shapley}), which leads to a large computational cost.
Although there exist fast methods for specific training algorithms~\citep{knnshap,tknnshap} and approximation methods~\citep{ds,fastds} for data Shapley values, it still takes quite a long time for general training algorithms due to a large amount of model retraining.
However, to only solve data cleansing, it is not necessary to compute $\phi_n$ exactly for all instances $d_n$; it is sufficient to identify those with small $\phi_n$.
For this purpose, we propose a fast method to identify instances with low data Shapley values named thresholding data Shapley (TDShap).

\subsection{Formulation as Thresholding Multi-armed Bandit Problem}
The data Shapley value of the $n$-th instance defined in Equation~(\ref{eq:data_shapley}) can be viewed as the average of
\begin{equation}\label{equ:Phi}
\Phi_n(\sigma)=V(D'(\sigma,n)\cup \{d_n\})-V(D'(\sigma,n))
\end{equation}
over all $\sigma\in S_D$.
Hence, $\Phi_n(\sigma)$ can be regarded as partial information of $\phi_n$.
Let us consider the following thresholding multi-armed bandit problem:
\begin{itemize}
    \item $N$ arms correspond one-to-one with the instances in the training set $D$.
    \item The reward distribution $P_n$ for arm $n\in\mathcal{N}$ is defined as the distribution of value $\Phi_n(\sigma)$ with probability $1/|S_D|$ for each $\sigma\in S_D$.
\end{itemize}
Note that the expected value of the distribution $\mu_n$ is equal to the data Shapley value $\phi_n$.
Therefore, the goal of this problem can be interpreted as identifying the arms, i.e., instances, with their data Shapley values not larger than the predetermined threshold $\tau$.
This meets the requirement of data cleansing, which is to identify all instances $n$ whose data Shapley values $\phi_n$ are not larger than the threshold $\tau$ without naively computing the values.

Since we have formulated data cleansing as a thresholding multi-armed bandit problem, the APT algorithm can be a solution for it.
The APT algorithm enables the selection of instances whose data Shapley values are not larger than $\tau$ through a limited number of adaptive trials.

\subsection{Basic algorithm}\label{sec:detail}
Algorithm \ref{algo:prop} shows the algorithm of TDShap.
It takes the following as inputs: the training set $D=\{d_1,\dots,d_N\}$, the validation set $D_{\text{val}}$, performance metric $V$, learning algorithm $\mathcal{A}$, and the hyperparameters $\tau$ for thresholding Shapley values and $\varepsilon$ for the precision of the threshold.
The outputs are the estimated data Shapley values for each instance in $D$ and the set of indices of expectedly harmful instances.

\begin{algorithm}[t]
    \DontPrintSemicolon
    \SetKwInOut{Input}{Input}
    \SetKwInOut{Output}{Output}
    \SetKwComment{Comment}{$\triangleright$\ }{}
    \Input{Training set $D=\{d_1,\dots,d_N\}$, validation set $D_{\text{val}}$, performance metric $V$, learning algorithm $\mathcal{A}$, hyperparameters $\tau,\varepsilon$}
    \Output{Estimated data Shapley values $\hat{\phi}_1,\dots,\hat{\phi}_N$, harmful instances $\hat{\mathcal{N}}_\text{harmful}=\{n\mid\hat{\phi}_n\leq \tau\}$}
    \BlankLine
    \For(\Comment*[f]{Initialization}){$n\gets1$ \KwTo $N$\label{algline:init}}{
        $\sigma\overset{\mathrm{iid}}{\sim}\text{Uniform}\left(S_D\right)$\Comment*[r]{Random permutation of $D$}\label{algline:sigma_0}
        Calculate $\Phi_n(\sigma)$ \Comment*[r]{Using Equation (\ref{equ:Phi})}\label{algline:init_phi1}
        $\hat{\phi}_n\gets\Phi_n(\sigma)$\label{algline:init_phi2}\;
        $T_{n}\gets1$\label{algline:init_tn}\;
    }
    \While{stopping criterion not met}{\label{algline:while}
        $n\overset{\mathrm{iid}}{\sim}\text{Uniform}\left(\argmin_n B_n\right)$\Comment*[r]{$B_n$ from Equation~(\ref{equ:bn})}\label{algline:sample_n}
        $\sigma\overset{\mathrm{iid}}{\sim}\text{Uniform}\left(S_D\right)$\Comment*[r]{Random permutation of $D$}\label{algline:sample_sigma}
        Calculate $\Phi_{n}(\sigma)$ \Comment*[r]{Using Equation (\ref{equ:Phi})}\label{algline:calc_phi}
        $\hat{\phi}_{n}\leftarrow\dfrac{T_{n}}{T_{n}+1}\hat{\phi}_{n}+\dfrac{1}{T_{n}+1}\Phi_{n}(\sigma)$\;
        $T_{n}\leftarrow T_{n}+1$\;\label{algline:update_tn}
    }
    \caption{Thresholding data Shapley}\label{algo:prop}
\end{algorithm}

The first block (lines \ref{algline:init}--\ref{algline:init_tn}) initialize the data Shapley values for each instance.
Each initial value is calculated by using a random permutation of the training set $\sigma$, which is uniformly sampled from $S_D$ (line~\ref{algline:sigma_0}).
For each $n\in\mathcal{N}$, two prediction models are trained using $D'(\sigma,n)\cup\{d_{n}\}$ and $D'(\sigma,n)$, respectively.
The models are used to calculate $\Phi_n\left(\sigma\right)$ using Equation (\ref{equ:Phi}), which is the initial estimation of the data Shapley value $\hat{\phi}_n$ (lines \ref{algline:init_phi1}--\ref{algline:init_phi2}).
$T_n$, which denotes the number of updates of the data Shapley value of the $n$-th instance, is initialized with one (line~\ref{algline:init_tn}).

The second block (lines \ref{algline:while}--\ref{algline:update_tn}) updates the estimation of data Shapley iteratively.
The instance $n$ whose data Shapley value will be updated is randomly selected from which the value of $B_n$ defined in Equation (\ref{equ:bn}) takes a minimum value (line~\ref{algline:sample_n}).
$\sigma$ here is also a random permutation of $S_D$ as in the initialization step (line~\ref{algline:sample_sigma}).
With the selected instance $n$ and permutation $\sigma$, $\Phi_n\left(\sigma\right)$ is calculated by using Equation (\ref{equ:Phi}) and the estimation of the data Shapley value $\hat{\phi}_n$ is updated as a moving average (lines \ref{algline:calc_phi}--\ref{algline:update_tn}).
The stopping criterion at line~\ref{algline:while} can be, for example, based on the computation time, number of iterations, or amount of update of $\phi_n$.

\subsection{Further enhancements}\label{sec:technique}

Algorithm \ref{algo:prop} improves data cleansing speed over using naive data Shapley.
In this subsection, we provide three enhancements to the algorithm to make it more accurate and faster.

\begin{enumerate}[label=(\roman*)]
\item \textbf{Restriction on cardinality of training subsets:} \label{enh1}
At lines \ref{algline:init_phi1} and \ref{algline:calc_phi}, if the number of instances in the training set $|D'(\sigma,n)|$ is too small,
the model may overfit $D'$ and $|\Phi_n|$ may become excessively large.
To prevent this problem, we propose restricting $\sigma$ so that $|D'(\sigma,n)|$ is not smaller than a certain predetermined value $N_{\min}$.

\item \textbf{Evaluation of multiple instances at a single iteration:} \label{enh2}
While only an instance $n$ is evaluated at each iteration of lines \ref{algline:while}--\ref{algline:update_tn}, we propose evaluating multiple instances simultaneously for faster calculation.
The detailed procedures are as follows.
We first select $K$ instances $n_{1},\dots,n_{K}$ with the smallest $B_n(t)$ values and update $\hat{\phi}_{n_{k}}$ for each.
This reduces the number of iterations required for convergence while increasing the computational cost of a single iteration.
Here, we restrict $\sigma$ sampled on line~\ref{algline:sample_sigma} on only permutations such that $n_{1},\dots,n_{K}$ occur successively in $\sigma$; this reduces
the number of training times when calculating Equation (\ref{equ:Phi}) on line~\ref{algline:calc_phi} from $2K$ to $K+1$ times and thus reduces the computation time to convergence.
For example, in the case when $K=3$ and $\sigma=(d_1,d_{n_{1}},d_{n_{2}},d_{n_{3}},d_2,d_3)$, each of $\Phi_{n_1}(\sigma)$, $\Phi_{n_2}(\sigma)$, and $\Phi_{n_3}(\sigma)$ can be evaluated by the following equations:
\begin{alignat*}{2}
\Phi_{n_1}&=V(\{d_1,d_{n_{1}}\})&&-V(\{d_1\}),\\
\Phi_{n_2}&=V(\{d_1,d_{n_{1}},d_{n_2}\})&&-V(\{d_1,d_{n_1}\}),\\
\Phi_{n_3}&=V(\{d_1,d_{n_{1}},d_{n_2},d_{n_3}\})&&-V(\{d_1,d_{n_1},d_{n_2}\}).
\end{alignat*}
Since $V(\{d_1,d_{n_{1}}\})$ and $V(\{d_1,d_{n_{1}},d_{n_2}\})$ appear twice above, only $K+1(=4)$ training is required. 
This enhancement can also be applied to the first block (lines \ref{algline:sigma_0}--\ref{algline:init_tn}).

\item \textbf{Pre-training using the entire training set:} \label{enh3}
To evaluate $\Phi_n\left(\sigma\right)$ at lines \ref{algline:init_phi1} and \ref{algline:calc_phi}, a lot of models need to be trained for various training sets $D'$.
Especially in the case of deep learning, it takes extensive time to complete training.
To reduce the number of training epochs to reduce training time, we propose using the model pre-trained using the entire training set to initialize parameters before starting the training using a subset $D'$.
Note that in this method, the influence of instances not in $D'$ may affect $V(D')$ through the pre-trained model; the effect of this on model performance will be empirically evaluated in Section \ref{sec:exp2}.

\end{enumerate}

Note that the enhancements \ref{enh1} and \ref{enh2} cause a slight difference in reward distribution; $\sigma$ sampled to evaluate $\Phi_n(\sigma)$ becomes out of uniform distribution on $S_D$ for a given instance $n$.
For example on enhancement~\ref{enh2}, when $\phi_{n_1}\simeq\tau$ holds and $\phi_{n_2}$ is extremely large, the number of evaluations of $d_{n_1}$ is much larger than that of $d_{n_2}$ and hence $\sigma$ in which $d_{n_1}$ and $d_{n_2}$ are consecutive is less likely to be selected.
This causes the expected value of $\Phi_n(\sigma)$ to not equal $\phi_n$ strictly.
Even in this case, the expected value can be regarded as a contribution metric for $V$ since $\Phi_n(\sigma)$ still expresses a weighted average value of improvements of $V$ by adding $d_n$ for various subsets.
The effect of the enhancements \ref{enh1} and \ref{enh2} will be empirically evaluated in Section~\ref{sec:exp1}.

\section{Theoretical Results}\label{sec:theo}
\subsection{Upper bound of Failure Probability}
Based on Theorem~\ref{the:1}, which is the theoretical guarantee of the APT algorithm, we derive a theoretical guarantee of TDShap.
The reward distributions must be $R$-sub-Gaussian in Theorem~\ref{the:1}.
On the other hand, on the TDShap algorithm, $R$ can be expressed explicitly by the maximum value of $\Phi_n(\sigma)$ and the minimum value of $\Phi_n(\sigma)$ since there exists only a finite number of possible values of $\Phi_n(\sigma)$.
As a result, the following theorem holds.

\begin{thm}\label{the:2}
Assume that
\begin{equation}\label{equ:katei}
T\geq \dfrac{64Nw^2}{\varepsilon^2}\log(N(1+\log T)),
\end{equation}
where $w\coloneqq\max_{n}\{\max_{\sigma\in S_D}\Phi_n(\sigma)-\min_{\sigma\in S_D}\Phi_n(\sigma)\}$.
For the TDShap algorithm, it holds that
\begin{equation}
p_{\mathrm{fail}}\leq \dfrac{1}{N^2(1+\log T)^2}.\label{eq:fail}
\end{equation}
\end{thm}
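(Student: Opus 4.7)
The plan is to invoke Theorem~\ref{the:1} with explicit constants tailored to the TDShap setting and then simplify the resulting tail bound using the hypothesis (\ref{equ:katei}) on $T$. There are three quantities to pin down before Theorem~\ref{the:1} can be applied: the sub-Gaussian parameter of each arm, an upper bound on the complexity $H$ defined in (\ref{eq:complexity}), and the verification that the condition $T \geq 256 R^2 H \log(N(1+\log T))$ of Theorem~\ref{the:1} is implied by (\ref{equ:katei}).

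First I would check that each reward distribution $P_n$ is $(w/2)$-sub-Gaussian. Since $\Phi_n(\sigma)$ takes only finitely many values as $\sigma$ ranges over $S_D$, and by definition of $w$ all of these values lie in an interval of length at most $w$, Hoeffding's lemma for bounded random variables gives the claim with $R=w/2$. Next I would bound $H$ trivially using $|\mu_n-\tau|+\varepsilon \geq \varepsilon$ for every $n$, which yields $H \leq N/\varepsilon^2$. Plugging $R=w/2$ and this bound on $H$ into the hypothesis of Theorem~\ref{the:1} turns it into $T \geq 64 N w^2/\varepsilon^2 \cdot \log(N(1+\log T))$, which is exactly (\ref{equ:katei}); hence Theorem~\ref{the:1} applies.

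It remains to simplify the resulting bound $p_{\mathrm{fail}} \leq \exp(-T/(128 R^2 H))$. Substituting $R=w/2$ and $H \leq N/\varepsilon^2$ once more gives $p_{\mathrm{fail}} \leq \exp(-T\varepsilon^2/(32 N w^2))$. Appealing to (\ref{equ:katei}) again, the exponent is at most $-2\log(N(1+\log T))$, and exponentiating recovers the claimed bound (\ref{eq:fail}). The only subtlety is bookkeeping: the factor $64$ in (\ref{equ:katei}) is engineered so that after feeding $T$ back into $T/(128 R^2 H)$ the exponent becomes exactly $2\log(N(1+\log T))$, producing the convenient polynomial-in-$N$, polylogarithmic-in-$T$ tail bound. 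I do not expect any genuine obstacle beyond this matching of constants; the coarse bound $H \leq N/\varepsilon^2$ is loose but sufficient because the hypothesis on $T$ has absorbed the looseness.
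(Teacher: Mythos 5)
Your proposal is correct and follows essentially the same route as the paper's proof: bound $H\leq N/\varepsilon^2$, observe that the bounded range of $\Phi_n(\sigma)$ makes each reward distribution $(w/2)$-sub-Gaussian, check that hypothesis (\ref{equ:katei}) implies the hypothesis of Theorem~\ref{the:1}, and then use (\ref{equ:katei}) a second time to turn the exponential tail into the stated polynomial bound. The only cosmetic difference is that you invoke Hoeffding's lemma by name where the paper cites references for the same bounded-implies-sub-Gaussian fact.
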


\begin{proof}[Proof of Theorem~\ref{the:2}]
The problem complexity defined in Equation (\ref{eq:complexity}) can be upper bounded as
\begin{equation}\label{eq:H_upperbound}
    H=\sum_{n=1}^N\dfrac{1}{(|\phi_n-\tau|+\varepsilon)^2}\leq\dfrac{N}{\varepsilon^2},
\end{equation}
and thus the assumption in Equation (\ref{equ:katei}) is further lower bounded as
\begin{align}
    T&\geq \dfrac{64Nw^2}{\varepsilon^2}\log(N(1+\log T))\nonumber\\
    &\geq 256H\left(\dfrac{w}{2}\right)^2\log(N(1+\log T)).\label{eq:T_lowerbound}
\end{align}
Equation (\ref{eq:T_lowerbound}) is exactly the assumption of Theorem \ref{the:1} for $w/2$-sub-Gaussian.
Here, for any $n$, the value $\Phi_n(\sigma)$ is in the interval $[\min_{\sigma}\Phi_n(\sigma),\max_{\sigma}\Phi_n(\sigma)]$ whose width is at most $w$.
Since the distribution within the width $w$ interval is $w/2$-sub-Gaussian~\citep{subgauss1,subgauss2}, the reward distribution for each instance is $w/2$-sub-Gaussian.
Therefore, Theorem \ref{the:1} with Equation (\ref{eq:H_upperbound}) guarantee the following under the assumption of Equation (\ref{equ:katei}):
\begin{equation*}
    p_{\mathrm{fail}}\leq\exp\left(-\dfrac{T}{128(w/2)^2H}\right)\leq\exp\left(-\dfrac{T\varepsilon^2}{32Nw^2}\right).
\end{equation*}

In addition, from Equation (\ref{equ:katei}) the following holds: 
\begin{equation*}
-\dfrac{T\varepsilon^2}{32Nw^2}\leq -2\log(N(1+\log T)).
\end{equation*}
Hence under the assumption of Equation (\ref{equ:katei}), the following holds:
\begin{equation*}
p_{\mathrm{fail}}\leq\exp\left(-\dfrac{T\varepsilon^2}{32Nw^2}\right)\leq\dfrac{1}{N^2(1+\log T)^2}.
\end{equation*}
\end{proof}

A similar theorem still holds when TDShap is used with enhancement~\ref{enh1}.
Let $S'_{D,n}\coloneqq\{\sigma\in S_D \mid |D'(\sigma,n)|\geq N_{\min}\}$ be the set of permutations of $S_D$ with the cardinality restriction in enhancement~\ref{enh1}, and $w'\coloneqq\max_{n}\{\max_{\sigma\in S'_{D,n}}\Phi_n(\sigma)-\min_{\sigma\in S'_{D,n}}\Phi_n(\sigma)\}$ be the corresponding width of the range of $\Phi_n\left(\sigma\right)$.
By replacing $w$ with $w'$ in the assumption of Equation \ref{equ:katei}, Equation \ref{eq:fail} holds.

\subsection{Upper Bound of Width}\label{sec:wbound}
From Theorem \ref{the:2}, smaller $w$ results in fewer iterations $T$ required.
The upper bound of the width $w$ can be computed in many cases. 
For example, assuming classification problems and $V$ is accuracy, from $0\leq V\leq 1$, it can be derived that $-1\leq\Phi_n(\sigma)\leq 1$ and thus $w\leq 2$.
When assuming regression problems and $V$ is the negative of MSE, under the assumption that the model prediction is between the minimum of label value $y_{\min}$ and the maximum $y_{\max}$ over $D\cup D_{\text{val}}$, it can be derived that $-(y_{\max}-y_{\min})^2\leq V\leq 0$ and $w\leq 2(y_{\max}-y_{\min})^2$.
Similarly, in the case of the negative of MAE, $w\leq 2(y_{\max}-y_{\min})$ holds.
Practically, $w$ is expected to be much smaller than the above upper bounds since adding one instance to the training set has a very small impact on the model in many cases.
Such an assumption is reasonable when the number of instances in the training set is guaranteed to be sufficiently large, for example when using enhancement~\ref{enh1}.
In Appendix~\ref{sec:apeboundtree}, we take the example case of decision trees and show that $w$ can be further upper-bounded under some reasonable assumptions.

\section{Experiments}\label{sec:exp}
\subsection{Settings}
\begin{table*}[t]
    \centering
    \caption{Overview of datasets and tasks. }\label{tab:data1}
    \begin{tabular}{@{}lccrrrcc@{}}
      \toprule
      &&&\multicolumn{3}{c}{\bfseries\# of instances}\\\cmidrule(lr){4-6}
      \bfseries Dataset & \bfseries Type&\bfseries \# of features & \multicolumn{1}{c}{\textbf{Train}}&\multicolumn{1}{c}{\textbf{Val}}&\multicolumn{1}{c}{\textbf{Test}} & \bfseries Task  & \bfseries Model\\\midrule
      Abalone &Table &8 & 1,000&1,000&1,000 & Regression  & Five models\\
      Breast Cancer &Table& 30 & 150 & 150 & 269 & Binary classification  &Decision tree\\
      Fashion-MNIST &Image & $28\times 28$ pixels & 1,000&1,000&1,000 & 10-class classification  &CNN\\
      Livedoor News Corpus&Text &19--11,632 chars & 250&250&500 & 9-class classification  &BERT\\\bottomrule
    \end{tabular}
\end{table*}
We evaluate the effectiveness of the proposed method at data cleansing over various datasets and tasks, which are summarized in Table~\ref{tab:data1}.
We used four datasets for the evaluation: Abalone~\citep{abalone} for tabular regression, Breast Cancer~\citep{breastcancer} for tabular classification, Fashion-MNIST~\citep{fmnist} for image classification, and Livedoor News Corpus\footnote{\url{https://www.rondhuit.com/download_en.html\#ldcc}} for text classification.
Each dataset is split into training, validation, and test sets.

The proposed method is compared with two conventional data cleansing methods: naive data Shapley (Truncated Monte Carlo Shapley in \citet{ds}) and leave-one-out (LOO) \citep{loo}.
In LOO, harmful instances are sequentially identified $n_{\text{LOO}}$ at a time on the basis of the following contribution of the instance $n$:
\begin{equation*}
    \phi_{n}^{\text{LOO}}=V(D)-V(D-\{d_n\}).
\end{equation*}
That is, $n_{\text{LOO}}$ instances with the lowest $\phi_{n}^{\text{LOO}}$ are identified as the most harmful instances, and $n_{\text{LOO}}$ instances with the lowest $\phi_{n}^{\text{LOO}}$ computed after removing the first group from $D$ are identified as the second harmful instances, and so on for the remaining instances.
For each of the three methods, we calculate the contribution of each instance in the training set to the metric $V$ calculated by using the validation set and then remove $n_{\mathrm{remove}}$ instances with low contributions from the training set.
The number of instances $n_{\mathrm{remove}}$ to be removed is determined to maximize the performance of the validation set.
Finally, we evaluate the degree of improvements on the test set by removing $n_{\mathrm{remove}}$ instances from the training set.
For each experiment, we report the average improvements and computation time over 10 trials with different random seeds and dataset split.

We consider that instances with positive (negative) data Shapley values positively (negatively) affect the model performance.
Simply setting $\tau=0$ is possible, but the prediction $\phi_n$ is uncertain when $\phi_n\in\left[-\varepsilon,\varepsilon\right]$; to exclude only those instances that predicted $\phi_n$ to be negative with sufficient certainty, $\tau$ is set to a negative value with a small absolute value and $\varepsilon$ is set to satisfy $\tau+\varepsilon=0$.
More specifically, we set $(\tau,\varepsilon)=(-0.1,0.1)$ in Section \ref{sec:exp1} and $(\tau,\varepsilon)=(-0.01,0.01)$ in Section \ref{sec:exp2}.
As a stopping criterion in line~\ref{algline:while}, we fixed the number of iterations $n_{\text{iter}}$ as $n_{\text{iter}}=500$ for the Fashion-MNIST dataset and $n_{\text{iter}}=50$ for the other datasets.

\subsection{Evaluation using Various Models}\label{sec:exp1}
We first validated that the proposed method is effective independent of the model using the Abalone dataset.
We used decision trees (DTree), support vector regression (SVR), ridge regression, multi-layer perceptron (MLP), and gradient-boosted decision trees (GBDT) as regression models.
We used the scikit-learn implementation~\citep{sklearn} for the first four models and the LightGBM implementation \citep{lgbm} for GBDT.
For baseline LOO, we used $n_\text{LOO}=100$ and $n_\text{LOO}=1$, each of which is referred to as $\text{LOO}_{100}$ and $\text{LOO}_{1}$, respectively.
In addition to LOO and data Shapley, we also report the results of random removal, in which instances to be removed are randomly selected regardless of their contribution to model performance.
Unless otherwise specified, enhancements \ref{enh1} and \ref{enh2} are used for the proposed TDShap in this subsection.
Note that enhancement~\ref{enh3} is not used here since no considered models are based on deep learning.
$N_\mathrm{min}$ in enhancement~\ref{enh1} is set to 900 for SVR and ridge regression and 100 otherwise.
$K$ in enhancement~\ref{enh2} is set to 100.

\paragraph{Prediction performance}
\begin{table*}[t]
    \centering
    \caption{MAEs of various regression models on the Abalone dataset with and without instance removal. The values in each cell are the mean and standard deviation over 10 trials. The result of $\text{LOO}_{1}$ with MLP is not available because the computation time is too long for practical use.
    }\label{tab:resultacc}
    \resizebox{\linewidth}{!}{%
        \begin{tabular}{@{}lcccccc@{}}
            \toprule
            &\textbf{w/o instance removal} &\multicolumn{5}{c}{\bfseries w/ instance removal} \\
            \cmidrule(lr){2-2}\cmidrule(l){3-7}
            \cmidrule(l){4-5}
            \bfseries Model &\bfseries Baseline & \bfseries Random &  {\textbf{LOO}$_{100}$} & {\textbf{LOO}$_{1}$} &\bfseries Data Shapley &\bfseries TDShap \\\midrule
            DTree & $1.739\pm 0.052$ & $1.740\pm 0.055$ &  $1.689\pm 0.053$ & $\textbf{1.678}\pm 0.031$& $1.684\pm 0.047$ & $\textbf{1.678}\pm 0.046$ \\
            SVR & $1.626\pm 0.051$ & $1.625\pm 0.051$  & $1.596\pm 0.046$ & $\textbf{1.577}\pm 0.028$ & $1.595\pm 0.042$ & $1.595\pm 0.041$ \\
            Ridge & $1.617\pm 0.055$ & $1.616\pm 0.056$ &   $1.567\pm 0.032$ & $1.563\pm 0.033$&$1.568\pm 0.049$ & $\textbf{1.562}\pm 0.048$ \\
            MLP & $1.708\pm 0.066$ & $1.687\pm 0.065$ & $1.615\pm 0.053$ & N/A  & $\textbf{1.604}\pm 0.055$ &$1.610\pm 0.040$  \\
            GBDT & $1.645\pm 0.043$ & $1.654\pm 0.042$ &  $1.627\pm 0.033$ & $1.619\pm 0.030$ & $\textbf{1.595}\pm 0.050$ & $\textbf{1.595}\pm 0.041$ \\
            \bottomrule
        \end{tabular}%
    }
\end{table*}

Table~\ref{tab:resultacc} shows MAEs on the Abalone dataset with and without data cleansing.
Baseline represents the results of the model trained by using the entire training set $D$.
While random removal did not improve performance over the baseline, LOO, data Shapley, and the proposed TDShap algorithm consistently reduced MAEs for all five models.
In terms of the range of improvements, TDShap showed almost the same MAE as LOO$_{1}$ and data Shapley with low computational cost except that LOO performs better for SVR and worse for GBDT. 
Especially, in the case of DTree and ridge regression, TDShap achieved better MAEs than data Shapley.
The reason for this appears in Figure~\ref{fig:abalonetree}, which visualizes the MAEs on the validation and test sets when varying the number of removed instances to train DTree.
The gaps between MAEs on the validation and those on test sets are larger in data Shapley than in other methods.
This indicates that data Shapley tends to overfit the validation set, which makes it difficult to determine the best number of instances to be removed.
On the other hand, TDShap does not overfit the validation set.
This may be because TDShap refers to the validation set much fewer times than data Shapley.

\begin{figure}[t]
  \centering
  \subfloat[Validation set]{\includegraphics[width=0.495\linewidth]{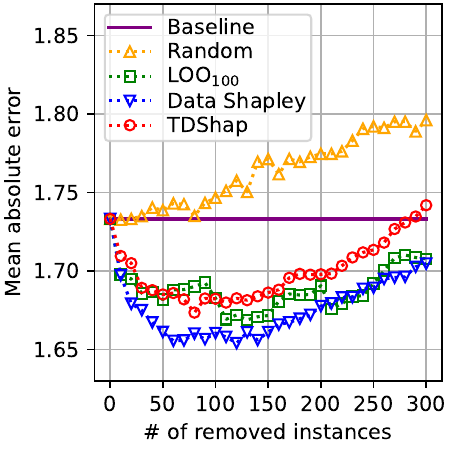}}\hfill
  \subfloat[Test set]{\includegraphics[width=0.495\linewidth]{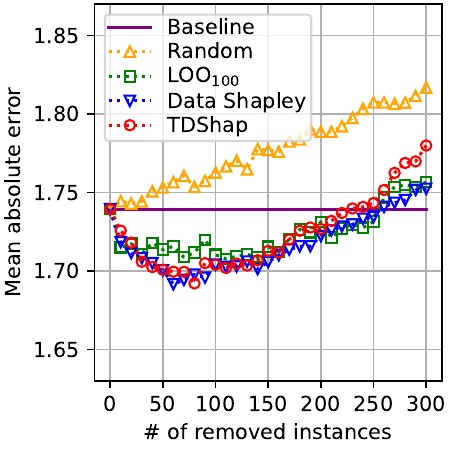}}
  \caption{MAEs of DTree when varying the number of instances to be removed on the Abalone dataset.
}\label{fig:abalonetree}
\end{figure}

\paragraph{Computation time}
\begin{table}[t]
    \centering
    \setlength{\tabcolsep}{1pt}
    \caption{Comparison of computation time between baselines and TDShap on the Abalone dataset.}\label{tab:resultalgotime2}
    \resizebox{0.96\linewidth}{!}{%
    \begin{tabular}{@{}lS[table-format=+1.1e+1]S[table-format=+1.1e+1]S[table-format=+1.1e+1]S[table-format=+1.1e+1]@{}}
        \toprule
        & \multicolumn{4}{c}{\textbf{Computation time [s]}} \\\cmidrule(l){2-5}
         \textbf{Model} &  \textbf{LOO}$_{100}$& \textbf{LOO}$_{1}$ & \textbf{Data Shapley} & \textbf{TDShap}  \\\midrule
        DTree& 2.1e+1 & 3.4e+3 & 2.0e+2 & 9.0 \\
        SVR& 3.5e+2 & 1.6e+4 & 1.2e+3 & 8.4e+1\\
        Ridge& 1.7e+1 & 3.1e+3 & 8.8e+1& 9.3e+0\\
        MLP& 6.2e+3 & {> 1 week} & 4.8e+4& 4.8e+3 \\
        GBDT& 2.6e+2 & 2.3e+4 & 1.0e+3 & 1.0e+2\\\bottomrule
    \end{tabular}%
    }
\end{table}

Table~\ref{tab:resultalgotime2} shows the average computation time of each method.
The proposed TDShap is at least $200$ times faster for all models than LOO$_{1}$.
Also, TDShap is roughly at least ten times faster for all models than the conventional data Shapley with almost the same performance.
This indicated that TDShap could have avoided unnecessary calculations such as ranking within harmful instances and ranking within non-harmful instances.
Figure~\ref{fig:resultalgo} shows the relationship between the number of removed instances and MAE, which also supports this consideration.
In any model, data Shapley and TDShap have almost the same minimum MAE\footnote{The minimum MAEs in Figure~\ref{fig:resultalgo} differ from the values in Table~\ref{tab:resultacc} because the optimal $n_{\text{remove}}$ varies for each of 10 trials.}, but they differ in the range of the number of removed instances where the MAE is minimum, especially in the case of MLP.
More specifically, data Shapley's graph has a flat bottom and the MAE is about minimum in the range of 100 to 250 instances removed; on the other hand, TDShap's graph has a sharp bottom and the MAE is minimum in the range of 200 to 250 instances removed.
This means that TDShap did not correctly rank the 200 instances from the lower data Shapley value, but still did not affect the performance of data cleansing as in Table~\ref{tab:resultacc}.

\begin{figure*}[t]
    \centering
    \subfloat[SVR]{\includegraphics[width=0.245\linewidth]{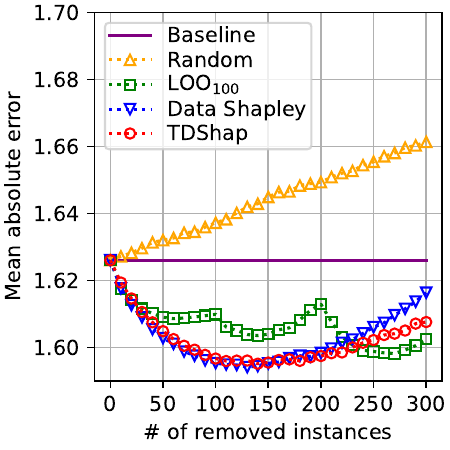}}\hfill
    \subfloat[Ridge]{\includegraphics[width=0.245\linewidth]{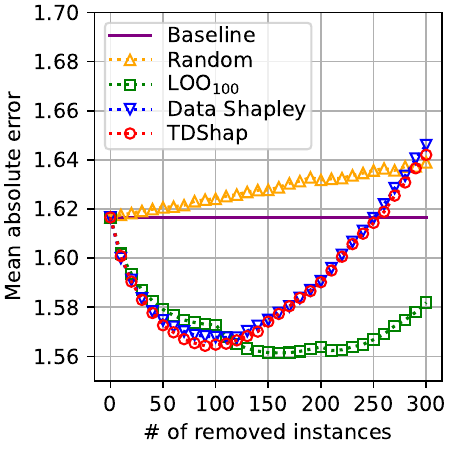}}\hfill
    \subfloat[MLP]{\includegraphics[width=0.245\linewidth]{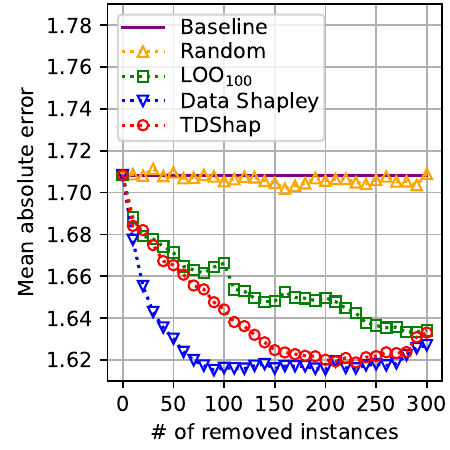}}\hfill
    \subfloat[GBDT]{\includegraphics[width=0.245\linewidth]{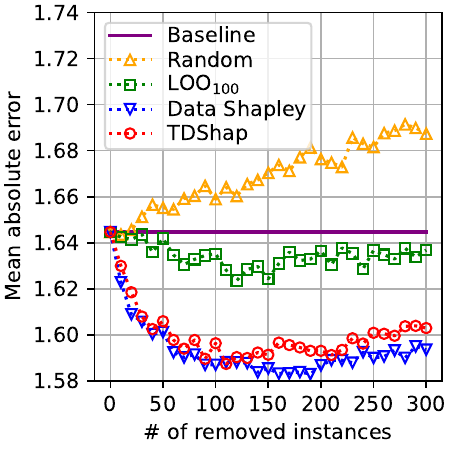}}
    \caption{MAEs when varying the number of instances to be removed on the Abalone dataset (test set).}\label{fig:resultalgo}
\end{figure*}

\paragraph{Ablation study}
Table~\ref{tab:resultenh12} shows the MAEs when enhancements \ref{enh1} and \ref{enh2} are excluded from the proposed method.
Note that enhancement~\ref{enh1} aims to improve accuracy by costing computational time, and enhancement~\ref{enh2} aims to reduce computation time by allowing biased sampling.
Both involve a trade-off between MAE and computation time, but here we compare the MAEs when the computation times are aligned by using the TDShap computation time as the standard; we terminated the computation at the TDShap computation time when either enhancement~\ref{enh1} or \ref{enh2} is not used.
The results in Table~\ref{tab:resultenh12} clearly show that both enhancements improved the MAEs.
\begin{table}[t]
    \centering
    \caption{Ablation study of MAEs on the Abalone dataset with respect to enhancements \ref{enh1} and \ref{enh2}. Note that the calculation time is the same between methods.}\label{tab:resultenh12}
    \resizebox{0.96\linewidth}{!}{%
    \begin{tabular}{@{}lccc@{}}
        \toprule
        \bfseries Model &\bfseries TDShap & \bfseries w/o enh. \ref{enh1} &\bfseries w/o enh. \ref{enh2} \\\midrule
        DTree & $\textbf{1.678}\pm 0.046$ & $1.697\pm 0.051$ & $1.696\pm 0.039$\\
        SVR & $\textbf{1.595}\pm 0.041$ & $1.602\pm 0.054$ & $1.596\pm 0.047$\\
        Ridge & $\textbf{1.562}\pm 0.048$ & $1.571\pm 0.052$ & $1.571\pm 0.050$\\
        MLP & $\textbf{1.610}\pm 0.040$ & $1.628\pm 0.055$ & $1.618\pm 0.049$\\
        GBDT & $\textbf{1.595}\pm 0.041$ & $1.611\pm 0.034$ & $1.604\pm 0.028$ \\
        \bottomrule
    \end{tabular}%
    }
\end{table}

\subsection{Evaluation on Various Datasets}\label{sec:exp2}
We next validated the effectiveness of TDShap independent of the datasets in the classification tasks using three datasets: Breast Cancer, Fashion-MNIST, and Livedoor News Corpus.
We used a decision tree, convolutional neural network (CNN) detailed in Appendix~\ref{sec:apemodel}, and bidirectional encoder representations from Transformers (BERT) \citep{bert} as a classifier for each dataset, respectively.
The same as in Section \ref{sec:exp1}, three methods of LOO with $n_\text{LOO}=|D|$, data Shapley, and TDShap are compared.
Since data Shapley requires computation time that is too long for Fashion-MNIST and Livedoor News Corpus datasets, we also tried terminating the calculation of data Shapley with TDShap computation time.
We denote this method data Shapley (short).
Otherwise specified, enhancements \ref{enh1} and \ref{enh2} are used for the decision tree, and all the enhancements are used for CNN and BERT.
$N_\mathrm{min}$ in enhancement~\ref{enh1} is set to 100, and $K$ in enhancement~\ref{enh2} is set to 50.

\begin{table*}[t]
    \centering
    \caption{Accuracies (\%) with and without instance removal. The results are not available for data Shapley with Fashion-MNIST and Livedoor News Corpus because the computation cannot be completed in a realistic amount of time.}\label{tab:resultaccdata}
    \resizebox{\linewidth}{!}{%
        \begin{tabular}{@{}lccccc@{}}
            \toprule
            &\textbf{w/o instance removal} &\multicolumn{4}{c}{\bfseries w/ instance removal} \\\cmidrule(lr){2-2}\cmidrule(l){3-6}
            \bfseries Dataset &\bfseries Baseline &\bfseries LOO &\bfseries Data Shapley (short) &\bfseries Data Shapley &\bfseries TDShap \\\midrule
            Breast Cancer & $0.903 \pm 0.023$ & $0.919\pm 0.025$ & $0.916\pm 0.024$ & $\textbf{0.929}\pm 0.018$ & $\textbf{0.929}\pm 0.012$ \\
            Fashion-MNIST & $0.825\pm 0.010$ & $0.829\pm 0.007$ & $0.829\pm 0.006$ & N/A &$\textbf{0.832}\pm 0.007$ \\
            Livedoor News Corpus & $0.732\pm 0.016$ & $0.736\pm 0.019$ & $0.734\pm 0.009$ & N/A 
     &$\textbf{0.742}\pm 0.018$ \\
            \bottomrule
        \end{tabular}%
    }
\end{table*}

\paragraph{Prediction performance}
Table~\ref{tab:resultaccdata} shows the accuracies achieved by each method.
All the methods improved the accuracy of the baseline model on all the datasets, and TDShap was the most effective among them.
In the case of Breast Cancer, TDShap and data Shapley achieved almost the same accuracy, but TDShap was $9.79$ times faster than data Shapley in terms of computation time.
In summary, the proposed TDShap has been proven to conduct fast and effective data cleansing for various types of data such as tables, images, and text.

\paragraph{Ablation study}
To validate enhancement~\ref{enh3}, an ablation study was also conducted in this case.
Since enhancement~\ref{enh3} is proposed for deep learning models, we chose Fashion-MNIST and Livedoor News Corpus as evaluation datasets.
When enhancement~\ref{enh3} is not used, the model is initialized with random weights instead of pre-trained weights.
Two types of cases were considered in which enhancement~\ref{enh3} is not used: the case where training is terminated at the same number of epochs as TDShap (\textit{Few-epoch}), and the case where training is continued until convergence (\textit{Full-train}).
As shown in Table~\ref{tab:resultenh3}, enhancement~\ref{enh3} successfully reduced the training duration without compromising classification performance.

\begin{table}[t]
    \centering
    \caption{Ablation study with respect to enhancement~\ref{enh3}. Each cell shows classification accuracy (top) and computational time (bottom).}\label{tab:resultenh3}
    \resizebox{\linewidth}{!}{%
    \begin{tabular}{@{}lccc@{}}
        \toprule
        & \textbf{w/ enh. \ref{enh3}} & \multicolumn{2}{c}{\textbf{w/o enh. \ref{enh3}}}\\
        \cmidrule(lr){2-2}\cmidrule(l){3-4}
        \bfseries Dataset &\bfseries TDShap & \bfseries Few-epoch & \bfseries Full-train \\\midrule
        Fashion-MNIST & $0.832\pm 0.007$ & $0.830\pm 0.005$ & $0.831\pm 0.009$ \\
        & 8.1 hours & 8.1 hours & 42 hours\\
        \midrule
        Livedoor News Corpus& $0.742\pm 0.018$ & $0.731\pm 0.020$ & $0.743\pm 0.019$ \\
        & 2.5 hours & 2.5 hours & 7.2 hours\\
        \bottomrule
    \end{tabular}%
    }
\end{table}

\section{Related Studies}\label{sec:stu}

\paragraph{Contribution measure of instance}
The proposed method identified harmful instances on the basis of their contribution measured by data Shapley~\citep{ds}.
Many other methods have been proposed to evaluate the contribution of instances to the model.
As a model-independent method, there is LOO~\citep{loo}, which was used as one of the baselines in the experiments in this paper.
As model-dependent methods, methods for differentiable models~\citep{if,sgdif,ifgan,tracin}, for decision trees and decision tree ensemble models~\citep{lfif1,lfif2}, and for specific multilayer models~\citep{rep} have been proposed.
In the context of data cleansing, however, methods other than data Shapley do not work well because they do not consider removing multiple instances simultaneously~\citep{ifsv}.

\paragraph{Outlier detection}
For data cleansing, outliers in a training set are naturally considered as harmful, and indeed a lot of outlier detection methods have been proposed for this purpose including one-class SVM~\citep{osvm}, local outlier factor~\citep{lof}, isolation forest~\citep{isof}, and others~\citep{od1,od2}.
On the other hand, outliers are not necessarily harmful to model performance~\citep{sgdif}.

\paragraph{Neuron Shapley}
Neuron Shapley aims to identify the most influential neurons in a deep learning model~\citep{ns}.
This is similar to the proposed TDShap in that it utilizes both the Shapley value and the multi-armed bandit algorithm, but the purpose of adopting them and the algorithms used are different.
While TDShap aims to identify instances with data Shapley values below a threshold using threshold bandits, Neuron Shapley aims to quickly identify the neurons with the top-$K$ Shapley values by using top-$K$ bandits \citep{topk1,topk2,topk3}.

\section{Conclusion and Future Work}\label{sec:con}

This paper proposed a fast data cleansing method named TDShap.
It identifies instances whose data Shapley values are below a given threshold using a thresholding bandit algorithm.
For faster computation and better prediction performance, we propose three enhancements: restriction on training subsets, evaluation of multiple instances, and pre-training.
We provided both theoretical guarantees on TDShap and empirical results using various models and datasets.

One future direction of this work is to apply a bandit framework other than the APT algorithm for data cleansing.
For example, though the proposed method aimed to identify all instances with lower contributions than the predetermined threshold, it may be preferable to identify a specified number of instances in order of contribution in some cases.
In such cases, the top-$K$ bandit framework can be a good alternative.
Also, when multiple instances are evaluated simultaneously as in enhancement~\ref{enh3}, the combinatorial bandit framework~\citep{comb} can be utilized.

\bibliography{template}

\newpage

\onecolumn

\title{Thresholding Data Shapley for Data Cleansing Using Multi-Armed Bandits\\(Supplementary Material)}
\maketitle

\appendix
\section{Tighter Upper Bound of Width for Decision Trees}\label{sec:apeboundtree}
In Section~\ref{sec:wbound}, we provided the upper bound of $w$ independent of types of models when $V$ is negative MSE as
\begin{equation}
    w'\leq 2(y_{\mathrm{max}}-y_{\mathrm{min}})^2.\label{eq:wbound}
\end{equation}
In this section, we provide a tighter upper bound of $w$ under realistic assumptions for the case where the model is a decision tree.

\begin{prop}\label{the:wbound}
Let $\mathcal{D}'\coloneqq \{D'\subset D\mid |D'|\geq N_{\min}\}$ be the set of all possible training subsets for TDShap with enhancement~\ref{enh1}.
Consider the case when the model is a decision tree and the performance metric $V$ is the negative of MSE.
Assume that for any $D'\in \mathcal{D}'$, each leaf of the decision tree trained using $D'$ has $n_{\mathrm{instance}}$ or more training instances.
Assume that the structure of the decision tree trained using $D'\in \mathcal{D}'$ does not depend on $D'$ except for the prediction values of each leaf.
Then, it holds that
\begin{equation*}
w'\leq 2(y_{\mathrm{max}}-y_{\mathrm{min}})^2\times\dfrac{2}{n_{\mathrm{instance}}+1}.
\end{equation*}
\end{prop}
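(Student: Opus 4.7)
The plan is to exploit the assumption that the tree structure is fixed across all training subsets in $\mathcal{D}'$. Under this hypothesis, passing from $D'$ to $D' \cup \{d_n\}$ alters only the prediction at the single leaf $L$ that contains $d_n$; every other leaf keeps an identical predictor, so the change in MSE is localized to the validation points that fall in $L$. This reduces bounding $w'$ to controlling a one-leaf, one-instance update.

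First I would quantify the change of the leaf prediction. If $L$ holds $k$ training instances from $D'$, regression-tree fitting makes the leaf prediction the empirical mean $\hat{y}_{\mathrm{old}} = \bar{y}_k$ of the corresponding labels; after adjoining $d_n$ the prediction becomes $\hat{y}_{\mathrm{new}} = \bar{y}_{k+1}$, and a direct calculation gives $\hat{y}_{\mathrm{new}} - \hat{y}_{\mathrm{old}} = (y_n - \bar{y}_k)/(k+1)$. Because every label lies in $[y_{\mathrm{min}},y_{\mathrm{max}}]$, the numerator has absolute value at most $y_{\mathrm{max}}-y_{\mathrm{min}}$, and the leaf-size assumption $k \geq n_{\mathrm{instance}}$ then yields $|\hat{y}_{\mathrm{new}} - \hat{y}_{\mathrm{old}}| \leq (y_{\mathrm{max}}-y_{\mathrm{min}})/(n_{\mathrm{instance}}+1)$.

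Next, letting $V_L \subseteq D_{\mathrm{val}}$ collect the validation instances routed to leaf $L$, I would write
\[
\Phi_n(\sigma) = \frac{1}{|D_{\mathrm{val}}|}\sum_{(x,y)\in V_L}\left[(y-\hat{y}_{\mathrm{old}})^2 - (y-\hat{y}_{\mathrm{new}})^2\right]
\]
and factor each summand as $(\hat{y}_{\mathrm{new}}-\hat{y}_{\mathrm{old}})\left[(y-\hat{y}_{\mathrm{old}})+(y-\hat{y}_{\mathrm{new}})\right]$. Since $y$, $\hat{y}_{\mathrm{old}}$ and $\hat{y}_{\mathrm{new}}$ all lie in $[y_{\mathrm{min}},y_{\mathrm{max}}]$ (the latter two are convex combinations of labels), the bracket has absolute value at most $2(y_{\mathrm{max}}-y_{\mathrm{min}})$. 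Combining this with the bound on the leaf-prediction change and the trivial inequality $|V_L| \leq |D_{\mathrm{val}}|$ gives $|\Phi_n(\sigma)| \leq 2(y_{\mathrm{max}}-y_{\mathrm{min}})^2/(n_{\mathrm{instance}}+1)$ uniformly in $n$ and in admissible $\sigma \in S'_{D,n}$. The stated bound on $w'$ then follows from $w' \leq 2\max_{n,\sigma}|\Phi_n(\sigma)|$.

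I do not foresee any serious obstacle: the fixed-structure assumption converts the problem into bookkeeping around a single leaf. The only small points worth checking are that both averaged predictions automatically stay in $[y_{\mathrm{min}},y_{\mathrm{max}}]$ so that the $2(y_{\mathrm{max}}-y_{\mathrm{min}})$ envelope on the bracket is legitimate, and that the leaf-size hypothesis applies to every $D'$ encountered (which it does because admissible $\sigma$ force $D'(\sigma,n) \in \mathcal{D}'$ by definition of $S'_{D,n}$).
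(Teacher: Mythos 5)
Your proposal is correct and follows essentially the same route as the paper's proof: localize the change to the single leaf containing $d_n$ (justified by the fixed-structure assumption), bound the shift of the leaf mean by $(y_{\max}-y_{\min})/(k+1)$ with $k\geq n_{\mathrm{instance}}$, and bound the resulting MSE difference by factoring the difference of squares, giving $|\Phi_n(\sigma)|\leq 2(y_{\max}-y_{\min})^2/(n_{\mathrm{instance}}+1)$ and hence the claim via $w'\leq 2\max_{n,\sigma}|\Phi_n(\sigma)|$. The only differences are cosmetic (you compute the mean update directly rather than via extremal label configurations, and you restrict the sum to the validation points in the affected leaf rather than bounding over all of $D_{\mathrm{val}}$).
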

The result of Proposition~\ref{the:wbound} is tighter in terms of the factor $2/(n_{\mathrm{instance}}+1)$ than the following general results for MSE in Equation~\ref{eq:wbound}.
The first assumption on $n_{\mathrm{instance}}$ of Proposition~\ref{the:wbound} is reasonable in the sense that $n_{\mathrm{instance}}$ can usually be given as a hyperparameter, e.g., \texttt{min\_samples\_leaf} in the scikit-learn implementation~\citep{sklearn}.
The second assumption that the tree structure does not change is extreme as a general statement, but it holds when $N_{\min}$ is close to $|D|$.
Especially in the case where $N_{\min}=|D|-1$, it turns into the assumption that the tree structure is invariant by removing one instance from the training set, and methods based on this assumption are known to work well~\citep{lfif1,lfif2}.

\begin{proof}[Proof of Proposition~\ref{the:wbound}]
Since $w'\leq 2\max_{n,\sigma\in S'_{D,n}}|\Phi_n(\sigma)|$ from the definition of $w'$, it is sufficient to prove
\begin{equation*}
|\Phi_n(\sigma)|\leq\dfrac{2(y_{\max}-y_{\min})^2}{n_{\mathrm{instance}}+1}
\end{equation*}
for any $n$ with $d_n\in D$ and $\sigma\in S'_{D,n}=\{\sigma\in S_D \mid |D'(\sigma,n)|\geq N_{\min}\}$. 

Let $l$ be the leaf to which $d_n$ belongs of the decision tree trained using $D'(\sigma,n)$.
The instance $d_n$ whose label value is $y_n$ also belongs to $l$ in the decision tree trained using $D'(\sigma,n)\cup\{d_n\}$ under the assumption of unchanging structure.
Let $y^{\mathrm{val}}_1,\dots,y^{\mathrm{val}}_{|D_{\text{val}}|}$ be the label value of each instance of the validation set $D_{\text{val}}$.
Let $y_a,y_b$ be the prediction value of $l$ trained using $D'(\sigma,n)$ and $D'(\sigma,n)\cup\{d_n\}$ respectively.

Since the metric $V$ is the negative of MSE, it holds that:
\begin{align}\label{eq:evalphi}
|\Phi_n(\sigma)|
&=\left|V(D'(\sigma,n)\cup\{d_n\})-V(D'(\sigma,n))\right|\notag \\
&\leq\dfrac{1}{|D_{\text{val}}|}\left|\displaystyle\sum_{i=1}^{|D_{\text{val}}|}(y^{\mathrm{val}}_i-y_a)^2-\sum_{i=1}^{|D_{\text{val}}|}(y^{\mathrm{val}}_i-y_b)^2\right|\notag \\
&=\dfrac{1}{|D_{\text{val}}|}\left|\displaystyle\sum_{i=1}^{|D_{\text{val}}|}\left\{-2(y_a-y_b)y^{\mathrm{val}}_i+(y_a^2-y_b^2)\right\}\right|\notag \\
&=\dfrac{2|y_a-y_b|}{|D_{\text{val}}|}\displaystyle\sum_{i=1}^{|D_{\text{val}}|}\left|y^{\mathrm{val}}_i-\dfrac{y_a+y_b}{2}\right|\notag \\
&\leq 2|y_a-y_b|(y_{\max}-y_{\min}).
\end{align}

Let $y^{\mathrm{tra}}_1,\dots,y^{\mathrm{tra}}_k$ be the label value of each instance of the training set $D'(\sigma,n)$ that belongs to $l$ of the decision tree.
Since the performance metric $V$ is the negative of MSE, the prediction value is the average of training instances:
\begin{align*}
y_a&=\dfrac{1}{k}\sum_{i=1}^ky^{\mathrm{tra}}_i,\\
y_b&=\dfrac{1}{k+1}y_n+\dfrac{1}{k+1}\sum_{i=1}^ky^{\mathrm{tra}}_i.
\end{align*}
The maximum of $|y_a-y_b|$ is achieved when $y_n=y_{\min}$ and $y^{\mathrm{tra}}_i=y_{\max}\:(i=1,\dots,k)$ or $y_n=y_{\max}$ and $y^{\mathrm{tra}}_i=y_{\min}\:(i=1,\dots,k)$. Hence the following holds:
\begin{equation}\label{eq:evalydiff}
|y_a-y_b|\leq \dfrac{y_{\max}}{k+1}-\dfrac{k\times y_{\min}}{k(k+1)}=\dfrac{y_{\max}-y_{\min}}{k+1}.
\end{equation}
By combining Equations (\ref{eq:evalphi})--(\ref{eq:evalydiff}) and the fact that $k\geq n_{\mathrm{instance}}$ due to the assumption of Proposition~\ref{the:wbound}, it holds that:
\begin{equation*}
|\Phi_n(\sigma)|\leq \dfrac{2(y_{\max}-y_{\min})^2}{k+1}\leq\dfrac{2(y_{\max}-y_{\min})^2}{n_{\mathrm{instance}}+1}.
\end{equation*}
\end{proof}

\section{Experimental Details}\label{sec:apemodel}
\subsection{Details of the experiments in Section \ref{sec:exp1}}
We used the scikit-learn 1.2.0 \citep{sklearn} and LightGBM 4.1.0 \citep{lgbm} implementations for the experiments in Section~\ref{sec:exp2}.
We used the default hyperparameters for almost all models. 
Specifically, each model is initialized by the following code:
\begin{Verbatim}[frame=single]
# Decision tree (DTree)
from sklearn.tree import DecisionTreeRegressor
model = DecisionTreeRegressor(max_depth=5, min_samples_leaf=64)

# Support vector regression (SVR)
from sklearn.svm import SVR
model = SVR()

# Ridge regression
from sklearn.linear_model import Ridge
model = Ridge()

# Multi-layer perceptron (MLP)
from sklearn.neural_network import MLPRegressor
model = MLPRegressor(max_iter=1000, batch_size=1000)

# Gradient-boosted decision tree (GBDT)
import lightgbm as lgb
model = lgb.LGBMRegressor()
\end{Verbatim}

Note that the values for \texttt{max\_depth} and \texttt{min\_samples\_leaf} of \texttt{DecisionTreeRegressor} are optimized in advance for the baseline model to minimize MAE on the validation set.
For \texttt{MLPRegressor}, we used full-batch training instead of mini-batch training; this is because if there is a mini-batch with few instances for some training subset $D'$, the model trained using $D'$ has low prediction performance.

\subsection{Details of the experiments in Section \ref{sec:exp2}}
The decision tree used in Section~\ref{sec:exp2} for the Breast Cancer dataset is initialized as in the following code:
\begin{Verbatim}[frame=single]
from sklearn.tree import DecisionTreeClassifier
model = DecisionTreeClassifier(max_depth=5, min_samples_leaf=2)
\end{Verbatim}

Table~\ref{tab:cnn} shows the structure of the CNN used in Section~\ref{sec:exp2} for the Fashion-MNIST dataset. 
We set the number of epochs for full training to $200$, the number of epochs for additional training for enhancement~\ref{enh3} to $20$, and batch size to $64$.

\begin{table*}[b]
    \centering
    \caption{Detailed structure of CNN used in our experiment.}\label{tab:cnn}
    \begin{tabular}{@{}lccl@{}}
      \toprule
      \bfseries Layer & \bfseries Output shape &\bfseries \# of parameters&Configuration\\
      \midrule
        Input& (1, 28, 28)& -- & $28\times28$ grayscale image\\\midrule
        \multirow{2}{*}{Conv1}& (32, 28, 28)& 320&$3\times3$ conv, 32-ch \& ReLU\\\cmidrule(l){2-4}
        & (32, 14, 14)& -- & $2\times2$ max pooling\\\midrule
        \multirow{2}{*}{Conv2}& (64, 14, 14)& 18.496&$3\times3$ conv, 64-ch \& ReLU\\\cmidrule(l){2-4}
        & (64, 7, 7)& -- & $2\times2$ max pooling\\\midrule
        Linear1& 128& 401,536&128-dim fully-connected, ReLU, dropout\\\midrule
        Linear2& 10& 1,290 &10-dim fully-connected\\
      \bottomrule
    \end{tabular}
\end{table*}

As the BERT model used in Section~\ref{sec:exp2} for the Livedoor News Corpus, we used the pre-trained model downloaded from Hugging Face\footnote{\url{https://huggingface.co/cl-tohoku/bert-base-japanese}}.
We set the number of epochs for training from the pre-trained model using the whole training set $D$ to $15$ with 500 warm-up steps, the number of epochs for additional training using each subset $D'$ to $5$, weight decay to $0.01$, and batch size to $8$.

\end{document}